\renewcommand{\epsilon}{\varepsilon}
\newcommand{\alg}{\textsc{Politex}\xspace}
\newcommand{\eealg}{\textsc{EE-Politex}\xspace}
\newcommand{\todoy}[2][]{\todo[size=\scriptsize,color=red!20!white,#1]{Yasin: #2}}
\newcommand{\todoc}[2][]{\todo[size=\scriptsize,color=blue!20!white,#1]{Csaba: #2}}
\newcommand{\todonev}[2][]{\todo[size=\scriptsize,color=green!20!white,#1]{Nevena: #2}}
\newcommand{\Ot}{\tilde{\mathcal{O}}}
\newtheorem{theorem}{Theorem}[section]
\newtheorem{lemma}[theorem]{Lemma}
 \Crefname{assumption}{Assumption}{Assumptions}
\newif\ifcomm
\newif\iflong
\newcounter{assumption}
\renewcommand{\theassumption}{A\arabic{assumption}}
\newenvironment{assumption}[1][]{\begin{trivlist}\item[] \refstepcounter{assumption}%
 {\bf Assumption\ \theassumption\ {\em (#1)} } }{
 \ifvmode\smallskip\fi\end{trivlist}}
\newcommand{\norm}[1]{\Vert#1\Vert}
\newcommand{\abs}[1]{\left\vert#1\right\vert}
\newcommand{\Real}{\mathbb R}                        
\newcommand{\R}{{\mathbb{R}}}                        
\newcommand{\E}{{\mathbf E}}                         
\newcommand{\one}[1]{\mathbf 1 \left\{#1\right\}}           
\newcommand{\eps}{\varepsilon}                       
\newcommand{\argmin}{\mathop{\rm argmin}}
\newcommand{\diag}{\mathop{\rm diag}}
\newcommand{\beq}{\begin{equation}}
\newcommand{\eeq}{\end{equation}}
   \newcommand\comm[1]{\textcolor{blue}{ #1}}
   \newcommand{\mtodo}[2]{\todo{{\bf #1}: #2}} 
   \def\here#1{{\bf $\langle\langle$#1$\rangle\rangle$}}
   \newcommand\comm[1]{}
   \newcommand{\mtodo}[2]{}
   \def\here#1{}
\newcommand{\cD}{{\cal D}}
\newcommand{\cZ}{{\cal Z}}
\newcommand{\cX}{{\cal X}}
\newcommand{\cA}{{\cal A}}
\newcommand{\bone}{\mathbf{1}}
\def\be{\begin{equation}}
\def\ee{\end{equation}}
\def\1{\mathbf{1}}
\newname\controllermixture{{\rm \textsc{MDP-policy-mixture}}}
\newname\stableset{{\rm \textsc{independent-set}}}
\newcommand{\ip}[1]{\langle #1 \rangle}
\newcommand{\ind}[1]{\mathbb{I}\left\{#1\right\}}
\newcommand{\PR}{\overline{\mathfrak{R}}}
\newcommand{\Regret}{\mathfrak{R}}
\title{Exploration-Enhanced Politex}
\author{%
Yasin Abbasi-Yadkori \And Nevena Lazi\'c \And Csaba Szepesv\'ari \And Gell\'ert Weisz 
}
\begin{document}

\maketitle

\begin{abstract}
We study algorithms for average-cost reinforcement learning problems with value function approximation. 
Our starting point is the recently proposed \alg algorithm, a version of policy iteration where the policy produced in each iteration is near-optimal in hindsight for the sum of all past value function estimates. \alg has sublinear regret guarantees in uniformly-mixing MDPs when the value estimation error can be controlled, which can be satisfied if all policies sufficiently explore the environment. Unfortunately, this assumption is often unrealistic.
Motivated by the rapid growth of interest in developing policies that learn to explore their environment in the lack of rewards (also known as no-reward learning),
we replace the previous assumption that all policies explore the environment with that 
a single, sufficiently exploring policy is available beforehand.
The main contribution of the paper is the modification of \alg to incorporate such an exploration policy in a way that  allows us to obtain a regret guarantee similar to the previous one but without requiring that all policies explore environment.
In addition to the novel theoretical guarantees, we demonstrate the benefits of our scheme on environments which are difficult to explore using simple schemes like dithering. While the solution we obtain may not achieve the best possible regret, it is the first result that shows how to control the regret in the presence of function approximation errors on problems where exploration is nontrivial.
Our approach can also be seen as a way of 
reducing the problem of minimizing the regret to learning a good exploration policy.
We believe that modular approaches like ours can be highly beneficial in tackling harder control problems.
\end{abstract}


\section{Introduction}

The focus of our work are no-regret model-free algorithms for average-cost reinforcement learning (RL) problems with value function approximation.  
Most of the existing regret results 
for model-free methods apply either to settings with no function approximation \citep{StrLiWe06,AzOsMu17,JinAlBuJo18}, or to systems with linear action-value functions and special structure, e.g. \citep{mflq2019}. 
 One exception is the recently proposed \alg algorithm by \citet{politex}. \alg is a variant of policy iteration, where the policy in each phase is selected to be near-optimal in hindsight for the sum of all past action-value function estimates.
 This is in contrast to standard policy iteration, where each policy is typically greedy with respect to the most recent action-value estimate.
In uniformly-mixing Markov decision processes (MDPs) when the action-value error decreases to some level $\eps_0$ at a parametric rate, the regret of \alg scales as
$\Ot( T^{3/4} + \varepsilon_0 T)$.

The \alg regret bound requires the error of a value function estimated from $m$ transitions to scale as $\Ot(\varepsilon_0 + \Ot(\sqrt{1/m})$. Such an error bound was shown by the authors to hold 
for the least-squares policy evaluation (LSPE) method of  \citet{bertsekas1996temporal} 
whenever all policies produced by \alg sufficiently explore the feature space (the constant hidden depends on the dimension of the feature space among other problem dependent constants). 
However, this assumption concerning uniform exploration is often unrealistic. 
More generally, controlling the value function estimation error in some fixed norm (as required by the general theory of \alg) may be difficult when policies have a strong control over which parts of the state space they visit.
In this work, we instead propose a deliberate exploration scheme that ensures sufficient coverage of the state/feature space. We also propose a value function estimation approach when working with linear value function approximation that allows us to obtain performance guarantees similar to the previously mentioned guaranteed, but under milder assumptions. 

In particular, in this work we propose to address this problem by separating the problem of coming up with a policy that explores well its environment (a version of pure exploration) 
from the problem of learning to maximize a specific reward function. 
In fact, the problem of pure exploration has been the subject of a large number of previous works (e.g., 
\citet{schmidhuber1991adaptive,thrun1992active,pathakICLR19largescale,hazan2019provably} and references therein).
The new assumption that we work with only requires a single exploratory policy to be available beforehand, and interleaves the target policy with short segments of exploration and our main result will quantify how the choice of the exploration policy impacts the regret. We view our approach as a practical remedy that allows algorithm designers to focus on one aspect of the full RL problem (either learning good value functions, or learning a good exploration policy).

We propose to estimate the value of states generated by the exploration policy using returns computed from rollouts generated by the target policy -- a hybrid, in-between case that has both off- and on-policy aspects. 
We provide a Monte-Carlo value estimation algorithm that can learn from such data, and analyze its error bound in the linear function approximation case.  Learning from exploratory data allows us to obtain a more meaningful bound on the estimation error than the available results for temporal difference (TD)-based methods~\citep{Sutton-1988}, as we cover states which the target policy might not visit otherwise. Under a uniform mixing assumption, the described estimation procedure can be accomplished using a single trajectory, and the exploration scheme can be thought of as performing soft resets within the trajectory. \todoc{I think the intro is quite long, so I'd remove the next paragraph. Or shorten it to one sentence or something that was not mentioned beforehand.}
\todoy{commented out}

We complement the novel algorithmic and theoretical contributions with synthetic experiments that are chosen  to demonstrate that explicit exploration is indeed beneficial in hard-to-explore MDPs.
While our analysis only holds for linear value functions, we also experiment with neural networks used for value function approximation and demonstrate that the proposed exploration scheme improves the performance of \alg on cartpole swing-up, a problem which is known  to be hard for algorithms which only explore using simple strategies like dithering.

\section{Background}
\subsection{Problem definition}

For an integer $k$, let $[k]=\{1,2,\dots,k\}$.  
We consider MDPs specified by $S$ states,  $A$ actions, cost function $c:[S]\times [A]\rightarrow \Real$, and transition matrix $P\in\Real^{(SA)\times S}$. The assumption that the number of states is finite is non-essential, while relaxing the assumption that the number of actions is finite is less trivial.
Recall that a policy $\pi$ is a mapping from states to distributions over actions and
following policy $\pi$ means that in time step $t\in \{1,2,\dots\}$, after observing state $x$ 
an action $a$ is sampled from distribution $\pi(\cdot|x)$. 
Denoting by $\{(x_t^*,a_t^*)\}_{t=1,2,\dots}$ the sequence of state-actions generated under a baseline policy $\pi^*$,
the regret of a reinforcement learning algorithm that gives rise to the state-action sequence $\{(x_t,a_t)\}_{t=1,2,\dots}$, with respect to policy $\pi^*$, is defined as
\begin{equation}
\Regret_T =  \sum_{t=1}^T c(x_t,a_t) -  \sum_{t=1}^T c(x_t^*, a_t^*) \;.
\end{equation}
Our goal is to design a learning algorithm that guarantees a small regret with high probability with respect to a large class of baseline policies.

Throughout the paper, we make the following assumption, which ensures that the quantities we define next are well-defined  \citep{yu2009convergence}.
\begin{assumption}[Single recurrent class]
\label{assumption:recurrence}
The states of the MDP under any policy form a single recurrent class.
\end{assumption}
MDPs satisfying this condition are also known as \emph{unichain} MDPs
\citep[Section 8.3.1][]{Puterman94}.
Under Assumption~\ref{assumption:recurrence}, 
the states observed under policy $\pi$ form a Markov chain that has a unique stationary distribution, denoted by $\mu_\pi$. We also let $\nu_\pi(x, a) = \mu_\pi(x) \pi(a|x)$, which is the unique stationary distribution of the Markov chain of state-action pairs under $\pi$. We use $P_\pi$ to denote the transition matrix under policy $\pi$. Define the $SA\times SA$ transition matrix $H_\pi$ by $(H_\pi)_{(x,a),(x',a')} = P(x'|x, a)\pi(a'|x')$.
For convenience, we will interchangeably write $H_\pi(x',a'|x,a)$ to denote this value. We will also do this for other transition matrices/kernels.
Finally, let $\lambda_\pi$ be the average cost of the policy $\pi$ and $D_\pi = \diag(\nu_\pi)$. We use subscript $e$ to denote the quantities related to the exploration policy. So $\mu_e$ is the stationary distribution of the exploration policy $\pi_e$.
Part of our analysis will require the following mixing assumption on all policies.
\begin{assumption}[Uniformly fast mixing]
\label{assumption:mixing}
Any policy $\pi$ has a unique stationary distribution $\mu_\pi$. Furthermore, there exists a constant $\kappa >0$ such that for any distribution $\mu'$, \todoc{Is $\kappa$ the same for all policies?}
\[
\sup_\pi \|(\mu_\pi - \mu')^\top P_\pi \|_{1} \leq \exp(-\nicefrac{1}{\kappa}) \|\mu_\pi - \mu' \|_{1} \,.
\]
\end{assumption}

\subsection{Action-value estimation}
For a fixed policy, we define $Q_\pi(x,a)$, the action-value of $\pi$ in $(x,a)$, as the expected total differential cost incurred when the process is started from state $x$, the first action is $a$ and in the rest of the time steps policy $\pi$ is followed. Here, the differential cost in a time step is the difference between the immediate cost for that time step and the average expected cost $\lambda_\pi$ under $\pi$.

Under our assumption, up to addition of a scalar multiple of $\bone = (1,1,\dots)^\top$,  $Q_\pi$ (viewed as a vector)
is the unique solution to the Bellman equation $Q_\pi = c - \lambda_\pi \bone + H_\pi Q_\pi$. 
We will also use that given a function $Q: [S]\times[A] \to \R$, 
any \emph{Boltzmann policy}  $\pi(a|x) \propto \exp(-\eta Q(x, a))$ is invariant to shifting $Q$ by a constant.

One common approach to dealing with large state-action spaces is using function approximators to approximate $Q_\pi(x, a)$. While our algorithm works with general function approximators, we will provide an estimation approach and analysis for linear approximators, where the approximate solution  lies in the subspace spanned by a feature matrix $\Psi\in\Real^{SA\times d}$. 
With linear function approximation, value estimation algorithms solve the projected Bellman equation defined by
$\Psi w = \Pi_{\pi_e} (c - \lambda_\pi + H_\pi \Psi w)$. 
Here, $\Pi_{\pi_e} = \Psi (\Psi^\top D_{\nu_e}\Psi)^{-1}\Psi^\top D_{\nu_e}$ is the projection matrix with respect to the stationary distribution of a policy ${\pi_e}$. In \emph{on-policy} estimation, $\pi_e = \pi$, while in \emph{off-policy} estimation $\pi_e$ is a behavior policy used to collect data. 
Let $w^{\text{TD}}$ be the solution of the above equation, also known as the TD solution. Let $\widehat w$ be the on-policy estimate computed using $m$ data points. For on-policy algorithms, error bounds of the following form are available:
$\norm{\Psi \widehat w - \Psi w^{\text{TD}}}_{\nu_\pi} \le O\left(\frac{1}{\sqrt{m}}\right)$.
For convergence results of this type, see ~\citep{Tsitsiklis-VanRoy-1997,Tsitsiklis-VanRoy-1999,antos2008learning,Sutton-Szepesvari-Maei-2009,Maei-Szepesvari-Bhatnagar-Sutton-2010,lazaric2012finite,Geist-Scherrer-2014,farahmand2016regularized,liu2012regularized,liu2015finite, bertsekas1996temporal, yu2009convergence}.

Unfortunately, one issue with such weighted-norm bounds is that the error can be very large for state-action pairs which are rarely visited by the policy.  
To overcome this issue, we aim to bound the error with respect to a weighted norm where the weights span all directions of the feature space. We will assume access an exploration policy $\pi_e$ that excites ``all directions'' in the feature-space:
\begin{assumption}[Uniformly excited features]
\label{assumption:exploration-all}
There exists a positive real $\sigma$ such that for the exploration policy $\pi_e$, $\lambda_{\min}(\Psi^\top D_{{\nu_{\pi_e}}} \Psi) \ge \sigma$.
\end{assumption}
Given the exploration policy, our goal will be to bound the error in the norm weighted by its stationary distribution $\nu_{\pi_e}$.
We will describe an algorithm for which we can obtain such a bound, which estimates the value function from on-policy trajectories whose initial states are sampled from $\nu_{\pi_e}$. 
We are not aware of any finite-time error bounds for off-policy methods in the average-cost setting. 

\subsection{Policy iteration and \alg}

Policy iteration algorithms (see e.g. \citet{bertsekas2011approximate}) alternate between executing a policy and estimating its action-value function, and policy improvement based on the estimate. In \alg, the policy produced at each phase $i$ is a Boltzmann distribution over the sum of all previous action-value estimates, $\pi_{i}(a|x)\propto \exp \left(-\eta \sum_{k=0}^{i-1} \widehat Q_k (x, a) \right)$. Here, $\widehat Q_k$ is the action-value estimate at phase $k$. Under uniform mixing and for action-value errors that behave as $O(1/\sqrt{T}+\epsilon_0)$, the regret of \alg scales as $O(T^{3/4}+T\epsilon_0)$. The authors show that the error condition holds for the LSPE algorithm when all policies (not only $\pi_e$) satisfy the feature excitation. 
Since this is difficult to guarantee in practice, in this work we aim to bound all errors in the same norm, weighted by the stationary state-action distribution of an exploratory policy $\pi_e$. Thus, we only require the feature excitation condition to hold for a single known policy. 

\begin{algorithm}[!t]
{\textsc{Politex}} (initial state $x_0$, exploration policy $\pi_e$, length $T$)
\begin{algorithmic}
\STATE $n=m=T^{2/5}$, $s'=\log T$, $s=T^{1/5}-s'$
\STATE  $\widehat Q_0(x, a)=0 \,\; \forall x, a$
\FOR{$i:=1,2,\dots, n$} 
\STATE Set $\pi_{i}(a|x)\propto \exp \left(-\eta \sum_{k=0}^{i-1} \widehat Q_k (x, a) \right)$
\STATE $\cZ_i =  \textsc{CollectData}(\pi_i, \pi_e, m, s, s')$
\STATE Compute $\widehat Q_{i}$ from $\cZ_i$
\ENDFOR
\STATE
\end{algorithmic}
\textsc{CollectData} ($\pi$, $\pi_e$,  $m$, $s$, $s'$):
\begin{algorithmic}
\FOR{$j:=1,2,\dots, m$} 
\STATE Execute exploration policy $\pi_e$ for $s'$ time steps and observe state $x_j$
\STATE Take action $a_j$, sampled uniformly at random, and observe state $x_j'$
\STATE Execute target policy $\pi$ from state $x_j'$ for $s$ time steps and obtain trajectory 
\[
R_j=\{x'_j, a^{(j)}_0, c_0^{(j)}, x_1^{(j)}, a_1^{(j)}, c_1^{(j)} \dots,  x_{s-1}^{(j)}, a_{s-1}^{(j)}, c_{s-1}^{(j)}\}
\]
\ENDFOR
\STATE Return data  $\cZ =  \{(x_1,a_1,R_1), \dots, (x_m,a_m,R_m)\}$

\end{algorithmic}
 \caption{\eealg}
\label{alg:eepolitex}
\end{algorithm}

\section{Exploration-enhanced \alg}

The proposed algorithm, which we will refer to as \eealg (exploration-enhanced \alg) is shown in Figure~\ref{alg:eepolitex}.  Compared to \alg, the main difference is that we assume  access to a fast-mixing exploration policy which spans the state space, and we run that policy in short segments at a fixed schedule. Intuitively, the exploration policy serves the purpose of performing soft resets of the environment to a random state within a single trajectory. 
The action-value function of each policy is then estimated using on-policy trajectories, whose initial states are chosen approximately i.i.d. from the stationary state distribution of the exploration policy.  We assume access to an estimation black-box which can learn from such data. In the next section, we show a concrete least-squares Monte Carlo (LSMC) algorithm with an error bound of $\eps(\delta, m) = \eps_0 + C \sqrt{\log(1/\delta) / m}$ when run on $m$ data tuples, where $\eps_0$ is the worst-case approximation error. Our value-estimation requirement is weaker than that in \citet{politex}, since we provide exploratory data to the estimation blackbox, rather than requiring each policy produced by \alg to sufficiently explore. However, as we show next, the exploration segments come at a price of a slightly worse regret of $O(T^{4/5})$ compared to $O(T^{3/4})$.

\subsection{Regret of \eealg}
Consider any learning agent that produces the state-action sequence $\{(x_t,a_t)\}_{t=1,2,\dots}$ while interacting with an MDP. For a fixed time step $t$, let $\pi_{(t)}$ denote the policy that is used to generate $a_t$. 
 Let $T_e$ be the rounds that the exploration policy is played, i.e. $T_e = \{t: \pi_{(t)} = \pi_e\}$, and let 
 $E_T = \sum_{t \in T_e} \lambda_{\pi_e} - \lambda_{\pi^*}$ be the pseudo-regret in those rounds. 
 Similarly to \citet{politex}, we decompose regret into pseudo-regret and noise terms:
\begin{align}
\label{eq:regret-terms}
\Regret_T &= E_T + \PR_T + V_T + W_T \,, \\
\PR_T &=  \sum_{t\in [T] \setminus T_e} (\lambda_{\pi_{(t)}} - \lambda_{\pi^*}),\; V_T = \sum_{t \in [T] \setminus T_e}  (c(x_t, a_t) - \lambda_{\pi_{(t)}}),\;
W_T =  \sum_{t \in [T] \setminus T_e}  (\lambda_{\pi^*} -  c(x_t^{*}, a_t^{*}))\,. \notag
\end{align}

We first bound the pseudo-regret by a direct application of  Theorem~4.1 of \citet{politex} to rounds $[T] \setminus T_e$.
\begin{theorem}
\label{eq:poliprgen}
Fix $0<\delta<1$. Let $\epsilon(\delta,m)>0$ and $Q_{\max}>0$ and $b\in \R$ be such that
 for any $i\in [n]$, 
with probability $1-\delta$, 
\begin{align}
\label{eq:ver}
\norm{ Q_{\pi_{i}} - \widehat Q_i }_{\nu^*}, \,
\norm{ Q_{\pi_{i}} - \widehat Q_i }_{\mu^*\otimes\pi_i} \le \epsilon(\delta,m)
\end{align}
and $\widehat Q_i(x,a) \in [b,b+Q_{\max}]$ for any $(x,a)\in \cX \times \cA$.
Letting  $\eta=\sqrt{8 \log(A)/n}/Q_{\max}$,
with probability $1-\delta$, the regret of \alg relative to the reference policy $\pi^*$ satisfies
\begin{align*}
\PR_T & \le  2T \,\epsilon(\delta/(2n),m)  
+ n^{1/2} m s Q_{\max}  S_\delta(A,\mu^*) \,,
\end{align*}
where
$S_\delta(A,\mu^*) =  \sqrt{ \frac{\log(A)}{2}} +
\left \langle \mu^*, \sqrt{ \frac{ \log (2/\delta) + \log(1 / \mu^*)}{2}} \right \rangle 
$
\,.
\end{theorem}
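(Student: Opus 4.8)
The plan is to derive this bound as a specialization of Theorem~4.1 of \citet{politex} to the subsequence of rounds $[T]\setminus T_e$. First I would note that these rounds are exactly the ones on which the policies $\pi_1,\dots,\pi_n$ produced by the outer loop of \eealg (Figure~\ref{alg:eepolitex}) are executed: in phase $i$ the policy $\pi_i$ is run for $\tau:=ms$ steps (split into $m$ rollouts of length $s$), together with $m$ exploratory segments and $m$ single uniformly-random actions. Since $\lambda_{\pi_i}$ depends only on $\pi_i$ and not on how the $\tau$ steps are arranged within the phase, and since the $nm$ uniform-action steps contribute at most $O(nm)$, of the same order as (or lower than) the leading term and absorbable into the constants, we have $\PR_T=\sum_{i=1}^n \tau\,(\lambda_{\pi_i}-\lambda_{\pi^*})$ with $n\tau=nms\le T$ -- precisely the quantity that Politex's analysis controls. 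The remaining work is to check the hypotheses of that theorem: the update $\pi_i\propto\exp(-\eta\sum_{k<i}\widehat Q_k)$ is the required exponential-weights iterate (true by construction); the accuracy bound \eqref{eq:ver} and the range $\widehat Q_i(x,a)\in[b,b+Q_{\max}]$ are assumed; and $\eta$ is set as prescribed.

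For completeness I would reproduce the three ingredients of the Politex argument. The first is the performance-difference identity: from the Bellman equation $Q_{\pi_i}=c-\lambda_{\pi_i}\bone+H_{\pi_i}Q_{\pi_i}$ and the fact that $\nu^*=\mu^*\otimes\pi^*$ is stationary under $\pi^*$, so that $(\nu^*)^\top H_{\pi_i}=(\mu^*\otimes\pi_i)^\top$, taking the inner product with $\nu^*$ and using $\langle\nu^*,c\rangle=\lambda_{\pi^*}$ gives $\lambda_{\pi_i}-\lambda_{\pi^*}=\langle\mu^*\otimes\pi_i-\nu^*,\,Q_{\pi_i}\rangle=\sum_x\mu^*(x)\,\langle\pi_i(\cdot|x)-\pi^*(\cdot|x),\,Q_{\pi_i}(x,\cdot)\rangle$. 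The second is to replace $Q_{\pi_i}$ by $\widehat Q_i$: writing $\langle\mu^*\otimes\pi_i-\nu^*,Q_{\pi_i}\rangle=\langle\mu^*\otimes\pi_i-\nu^*,\widehat Q_i\rangle+\langle\mu^*\otimes\pi_i-\nu^*,Q_{\pi_i}-\widehat Q_i\rangle$, the second summand is, by Jensen's inequality (both $\mu^*\otimes\pi_i$ and $\nu^*$ being probability vectors), at most $\|Q_{\pi_i}-\widehat Q_i\|_{\mu^*\otimes\pi_i}+\|Q_{\pi_i}-\widehat Q_i\|_{\nu^*}\le 2\epsilon(\delta/(2n),m)$ on the event that \eqref{eq:ver} holds simultaneously for all $i\in[n]$, which a union bound at confidence $\delta/(2n)$ makes probability at least $1-\delta/2$; summing over phases and multiplying by $\tau$ (using $n\tau\le T$) yields the term $2T\,\epsilon(\delta/(2n),m)$.

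The third ingredient controls the remaining main term $\tau\sum_{i}\sum_x\mu^*(x)\langle\pi_i(\cdot|x)-\pi^*(\cdot|x),\widehat Q_i(x,\cdot)\rangle$ state by state: for each fixed $x$, the sequence $\pi_1(\cdot|x),\pi_2(\cdot|x),\dots$ is exactly the exponential-weights forecaster on the action simplex fed the loss vectors $\widehat Q_i(x,\cdot)\in[b,b+Q_{\max}]$ (the offset $b$ being immaterial by shift-invariance of the Boltzmann policy), so with $\eta=\sqrt{8\log A/n}/Q_{\max}$ its regret against the comparator $\pi^*(\cdot|x)$ is at most $Q_{\max}\sqrt{n\log A/2}$, up to a high-probability Azuma--Hoeffding fluctuation term; aggregating these state-wise bounds against $\mu^*$ with a per-state failure-probability budget proportional to $\mu^*(x)$ (which converts $\log(1/\delta)$ into $\log(1/(\mu^*(x)\delta))$) and multiplying by $\tau=ms$ produces exactly $n^{1/2}\,ms\,Q_{\max}\,S_\delta(A,\mu^*)$. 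A final union bound over the two failure events ($\delta/2$ each) gives the stated $1-\delta$ guarantee.

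I do not expect a deep obstacle -- the statement is essentially a repackaging of Theorem~4.1 of \citet{politex} -- but two bookkeeping points need care. First, one must confirm that interleaving the $\pi_i$-rollouts with the exploration segments and the single random actions leaves $\PR_T=\sum_i\tau(\lambda_{\pi_i}-\lambda_{\pi^*})$ unchanged and that $n\tau\le T$, so that the displayed bound (written with $T$ in place of $n\tau$) is legitimate. Second, one must carry out the non-uniform union bound over states inside the Politex argument so that the fluctuation term aggregates into the $\langle\mu^*,\sqrt{(\log(2/\delta)+\log(1/\mu^*))/2}\rangle$ piece of $S_\delta(A,\mu^*)$ rather than a cruder $\sqrt{\log(S/\delta)}$.
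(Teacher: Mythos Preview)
Your proposal is correct and matches the paper's own treatment: the paper proves this theorem in one sentence, stating that it follows ``by a direct application of Theorem~4.1 of \citet{politex} to rounds $[T]\setminus T_e$,'' with no further argument given. Your sketch simply unpacks that citation---the performance-difference identity, the swap $Q_{\pi_i}\to\widehat Q_i$ controlled by \eqref{eq:ver} and a union bound, and the per-state exponential-weights regret aggregated against $\mu^*$ with the non-uniform confidence budget---and flags the same bookkeeping issues (handling of the $nm$ uniform-random-action steps, and $n\tau\le T$) that the paper leaves implicit.
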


Next, we bound the noise terms $W_T$ and $V_T$ using a modified version of Lemma~4.4 of \citet{politex} that accounts for additional policy switches due to exploration:
\begin{lemma} 
\label{lem:bound-VW}
Let \cref{assumption:mixing} hold. If the algorithm has a total of $2nm$ policy restarts, and each policy is executed for at most $s$ timesteps, then with probability at least $1-\delta$, we have that
\begin{align*}
 |W_T| \leq \kappa + 4\kappa\sqrt{2T\log\left(2/{\delta}\right)} \;\;\; {\rm and} \;\;\;
|V_T| \leq nm\kappa + 4 n \kappa\sqrt{4 ms \log\left(2ms/{\delta}\right)}.
\end{align*}
\end{lemma}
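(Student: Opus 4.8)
The plan is to view each of $V_T$ and $W_T$ as a sum of deviations of a bounded per-step cost from its stationary average along a trajectory that is interrupted by finitely many policy restarts, and to control such sums by combining the Poisson equation with the Azuma--Hoeffding inequality; \cref{assumption:mixing} enters only to give a uniform bound on the span seminorm of the bias functions. Concretely, for a policy $\pi$ write $c_\pi(x)=\sum_a\pi(a|x)c(x,a)$ and let $h_\pi=\sum_{k\ge 0}P_\pi^k(c_\pi-\lambda_\pi\bone)$ be the differential cost (bias), which solves the Poisson equation $c_\pi-\lambda_\pi\bone=h_\pi-P_\pi h_\pi$; iterating the one-step contraction of \cref{assumption:mixing} gives $\|(e_x-e_y)^\top P_\pi^k\|_1\le 4e^{-k/\kappa}$, so for bounded costs $|h_\pi(x)-h_\pi(y)|\le\sum_{k\ge 0}\|(e_x-e_y)^\top P_\pi^k\|_1=O(\kappa)$, i.e. $\spnorm{h_\pi}=O(\kappa)$ uniformly in $\pi$ (the universal constant is absorbed into the constants of the statement). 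For a single stretch $x_1,a_1,\dots,x_s,a_s$ produced by a \emph{fixed} policy $\pi$ from an arbitrary (possibly random) start state, substituting the Poisson equation and telescoping yields
\[
\sum_{t=1}^{s}\bigl(c(x_t,a_t)-\lambda_\pi\bigr)
= \bigl(h_\pi(x_1)-(P_\pi h_\pi)(x_s)\bigr)
+ \sum_{t=1}^{s}\bigl(c(x_t,a_t)-c_\pi(x_t)\bigr)
+ \sum_{t=1}^{s-1}\bigl(h_\pi(x_{t+1})-(P_\pi h_\pi)(x_t)\bigr).
\]
The first term has absolute value at most $\spnorm{h_\pi}=O(\kappa)$ (a value of $h_\pi$ minus a convex combination of values of $h_\pi$), and the two sums are sums of bounded martingale differences with respect to $\mathcal F_t=\sigma(x_1,a_1,\dots,x_t,a_t)$, since $a_t\sim\pi(\cdot|x_t)$ and $x_{t+1}\sim P_\pi(\cdot|x_t)$, with each increment of size $O(\kappa)$. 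This remains valid when $\pi$ is chosen from the past --- the phase policy $\pi_i$, hence $h_{\pi_i}$, is $\mathcal F$-measurable at the start of phase $i$ --- so the adaptivity of \eealg does not break the martingale structure.

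For $V_T$, the index set $[T]\setminus T_e$ is the union of the $nm$ on-policy target segments (the $\pi_e$ stretches are excluded; the at most $nm$ one-step uniform-action rounds each contribute $O(1)$ and are folded into the $O(nm\kappa)$ term). Applying the decomposition above to each target segment, the $nm$ boundary terms sum to at most $nm\kappa$. Within a phase $i$ the policy is the single $\pi_i$ across all $m$ of its segments, so the associated martingale-difference terms --- at most $2ms$ of them, each bounded by $O(\kappa)$ --- concatenate into one martingale of length at most $2ms$; Azuma--Hoeffding at level $\delta/(2n)$ gives an $O(\kappa\sqrt{ms\log(n/\delta)})$ bound for phase $i$, and a union bound over the $n$ phases (using $n\le ms$ to absorb $\log(n/\delta)$ into $\log(2ms/\delta)$, with the $4$'s in the statement giving the remaining slack) yields the claimed $4n\kappa\sqrt{4ms\log(2ms/\delta)}$. (Merging all phases into one martingale of length $O(nms)=O(T)$ would in fact give the stronger $O(\kappa\sqrt{T\log(1/\delta)})$; the stated per-phase bound is weaker and suffices.)

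The term $W_T$ is the deviation sum $\sum(\lambda_{\pi^*}-c(x^*_t,a^*_t))$ along a single trajectory of the fixed comparator policy $\pi^*$ of length at most $T$, with no learner-induced restarts; the decomposition with $\pi=\pi^*$ produces one boundary term bounded by $\spnorm{h_{\pi^*}}=O(\kappa)\le\kappa$, and Azuma--Hoeffding over the at most $T$ martingale-difference terms at confidence $\delta/2$ gives the fluctuation $4\kappa\sqrt{2T\log(2/\delta)}$; a union bound over the two events ($V_T$, $W_T$) gives the overall probability $1-\delta$. I expect the two delicate points to be (i) Step~1 --- extracting the uniform $O(\kappa)$ span bound on $h_\pi$ for \emph{all} (including data-dependent) policies cleanly from the one-step contraction --- and (ii) the bookkeeping for $V_T$: correctly counting the $nm$ target segments among the $2nm$ total restarts, checking that the policy is constant within a phase so the per-phase martingale can be formed, and tracking constants and the union bound so the numbers match the stated bound. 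Everything else is a routine application of Azuma--Hoeffding.
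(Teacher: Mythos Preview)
Your argument is correct, but the decomposition you use is genuinely different from the one in the paper. The paper splits $W_T$ (and analogously each segment of $V_T$) as
\[
W_T=\underbrace{\sum_{t}(\nu^*-\nu_t^*)^\top c}_{W_{T,1}}+\underbrace{\sum_t\bigl(\E[c_t^*]-c_t^*\bigr)}_{W_{T,2}},
\]
bounds $W_{T,1}\le\kappa$ directly from the geometric decay of $\|\nu^*-\nu_t^*\|_1$ implied by \cref{assumption:mixing}, and controls $W_{T,2}$ by writing it as $\ip{B_0-B_T,c}$ for the Doob martingale $B_i=\E[\sum_t X_t\mid X_1,\dots,X_i]$ of the empirical occupation measure, whose increments are bounded by $4\kappa$ (this is Lemma~A.1 of \citet{politex}). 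For $V_T$ it applies the same split per segment, picking up $nm\kappa$ from the $nm$ restarts and then combining the $m$ i.i.d.\ segment fluctuations inside each of the $n$ phases by Hoeffding. Your route instead invokes the Poisson equation for the bias $h_\pi$, extracts the boundary term from the telescoping identity, and obtains the fluctuation as an explicit martingale-difference sum with $O(\kappa)$ increments; the $O(\kappa)$ span bound on $h_\pi$ plays the same role as the paper's Doob-increment bound. Both arguments land on the same constants up to absolute factors; yours is somewhat more self-contained (no need to import the Doob-increment lemma), while the paper's version makes the ``bias $\le\kappa$ per restart'' accounting slightly more transparent. One small thing to tighten in your write-up: with the filtration $\mathcal F_t=\sigma(x_1,a_1,\dots,x_t,a_t)$ the term $h_\pi(x_{t+1})-(P_\pi h_\pi)(x_t)$ is \emph{not} centered given $\mathcal F_t$ (only given $\sigma(x_1,\dots,x_t)$); the clean fix is to group $[c(x_t,a_t)-c_\pi(x_t)]+[h_\pi(x_{t+1})-(P_\pi h_\pi)(x_t)]$ into a single increment adapted to the state-only filtration, which leaves the $O(\kappa)$ bound and the rest of your bookkeeping unchanged.
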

The proof is given in Appendix~\ref{sec:vwproof}.

Assume that the action-value error bound is of the form $\eps(\delta, m) = O(\eps_0 + C \sqrt{\log(1/\delta) / m})$, where $\eps_0$ is the irreducible approximation error (defined in the next section), and $C >0$ is a constant. 
Given $n$ policy updates and phases of length $m(s+s')$, the exploration term is bounded as $E_T = O(n m s')$, the pseudo-regret is bounded as $ \PR_T =O(m s\sqrt{n} + n \sqrt{m} s + \eps_0 T)$, and the terms $W_T$ and $V_T$ are of order $O(n m)$. By optimizing the terms, we obtain $n=T^{2/5}$, $m=T^{2/5}$, $s=T^{1/5}$, $s'=\log(T)$, and the corresponding regret is of order $\widetilde O(T^{4/5} + \eps_0 T)$.

\section{Least-squares Monte-Carlo estimation with exploration}

Our approach is to directly solve a least-squares problem and find a solution $\widehat w\in\Real^d$ such that $Q_\pi \approx \Psi \widehat w$. In order to do so, we use the definition of the value of policy $\pi$, 
\begin{equation}
\label{eq:defn-val}
Q_\pi(x,a) = c(x,a) - \lambda_\pi + \E[V_\pi(x')|x,a]\,,\qquad   V_\pi = \sum_{t=1}^\infty P_\pi^t (c_\pi - \lambda_\pi) \;.
\end{equation}
Unlike TD methods, this approach does not rely on the Bellman equation.

Let $u$ be the uniform distribution over the action space and let $\nu = \mu_e\otimes u$ be the data-generating distribution.\footnote{We define $\mu_e\otimes u$ to be the distribution on $[S]\times [A]$ that puts the probability mass $\mu_e(x) u(a)$ on pair $(x,a)$.} Let $\widetilde w = \argmin_w \norm{Q_\pi - \Psi w}_{\nu}$ 
be the best linear value function estimator with respect to norm $\norm{.}_{\nu}$. The irreducible approximation error is the largest $\eps_0$ such that $\norm{Q_\pi - \Psi \widetilde w}_{\nu} \le \eps_0$ uniformly for all policies. We use $W_{\text{max}}$ to denote a constant such that $\norm{\widetilde w} \le W_{\text{max}}$ uniformly for all policies. 

Let $s$ be a sufficiently large integer (polynomial in the mixing time of policy $\pi$). Let $ \{(x_1,a_1,R_1), \dots, (x_m,a_m,R_m)\}$ be a dataset, where for each $j\in [m]$, $x_j$ is a state sampled under exploration distribution $\mu_e$, $a_j$ is sampled from the uniform distribution $u$, and $R_j$ is a trajectory generated under policy $\pi$ starting at state $x'_j$ where $x'_j$ is the first state observed after taking action $a_j$ in state $x_j$. The trajectory $R_j$ has the form $R_j=\{x'_j, a^{(j)}_0, c_0^{(j)}, x_1^{(j)},  \dots, x_{s-1}^{(j)}, a_{s-1}^{(j)}, c_{s-1}^{(j)}\}$, where $x_k^{(j)} \sim P_\pi^{k}(.|x'_j)$ is the state obtained by starting from state $x'_j$ and running policy $\pi$ for $k$ rounds, and $c_k^{(j)}=c(x_{k-1}^{(j)}, a_{k-1}^{(j)})$. 
\beq
\label{eq:estQ}
Q^*_\pi(x_j,a_j) = c(x_j,a_j) - \lambda^*_\pi +  V^*_\pi(x'_j) \,,\qquad V^*_\pi(x'_j) 
\eeq

Consider the dataset $\cD = \{(x_1,a_1,  Q^*_\pi(x_1,a_1)), \dots, (x_m, a_m, Q^*_\pi(x_m, a_m))\}$. Let $\widehat w$ be the solution of the following least-squares problem
\begin{align*}
\widehat w &= \argmin_w \sum_{j=1}^m (Q^*_\pi(x_j,a_j) - w^\top \psi(x_j,a_j))^2 
\end{align*}
The state-action value estimate is $\widehat Q_\pi = \Psi \widehat w$.

We will analyze the above estimation procedure in the next section. Although we treat each trajectory $(x_j, a_j, R_j)$ as a single sample, in practice it is often beneficial to use all data. For example, first-visit Monte-Carlo methods rely on returns from the first time a state-action pair is encountered in a trajectory, while every-visit Monte Carlo methods average returns obtained from all observations of a state-action pair. We will refer to the analyzed approach as one-visit. 
 Also, the choice of estimator $ \lambda^*_\pi = c_s^{(j)}$ is to simplify the analysis. The estimate $\lambda^*_\pi = \frac{1}{s}\sum_{k=1}^s c_k^{(j)}$ might be more appropriate in practice. 

\subsection{Value estimation}
\label{sec:val-est}

First, we show that the bias due to the finite rollout length $s=T^{1/5}$ is exponentially small. 
\begin{lemma}
\label{lemma:bias}
For the choice of $s=T^{1/5}$ and for $T \ge (2 \kappa \log(4 T^{1/5}))^5$, we have that
\[
\abs{\lambda_\pi - \E[\lambda^*_\pi]} \le 2 e^{-T^{1/5}/\kappa}\,, \qquad \abs{\E[V^*_\pi(x'_j)] - V_\pi(x'_j)} \le e^{-T^{1/5}/(2\kappa)} \;.
\]
\end{lemma}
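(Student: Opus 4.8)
The plan is to reduce both inequalities to one geometric-mixing estimate and then unwind $\lambda^*_\pi$ and $V^*_\pi$ as conditional expectations of costs along the length-$s$ rollout. First I would record the elementary bound that, for the fixed policy $\pi$, any state $x$ and any $\ell\ge 0$,
\[
\bigl|(P_\pi^\ell c_\pi)(x)-\lambda_\pi\bigr|\;\le\;2\,e^{-\ell/\kappa},
\]
where $c_\pi(x)=\sum_a\pi(a|x)c(x,a)$ and, as is standard, costs take values in $[0,1]$ so $\|c_\pi\|_\infty\le 1$. This follows by writing $(P_\pi^\ell c_\pi)(x)-\lambda_\pi=(\delta_x-\mu_\pi)^\top P_\pi^\ell c_\pi$ (using $\mu_\pi^\top P_\pi=\mu_\pi^\top$ and $\mu_\pi^\top c_\pi=\lambda_\pi$, with $\delta_x$ the point mass at $x$), iterating \cref{assumption:mixing} $\ell$ times to get $\|(\delta_x-\mu_\pi)^\top P_\pi^\ell\|_1\le e^{-\ell/\kappa}\|\delta_x-\mu_\pi\|_1\le 2e^{-\ell/\kappa}$, and then applying Hölder's inequality.

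For the average-cost bias I would condition on $x'_j$ and note that, since $\lambda^*_\pi$ is the cost at the final step of the length-$s$ rollout from $x'_j$, we have $\E[\lambda^*_\pi\mid x'_j]=(P_\pi^{s}c_\pi)(x'_j)$; the displayed bound with $\ell=s=T^{1/5}$ then gives $|\E[\lambda^*_\pi\mid x'_j]-\lambda_\pi|\le 2e^{-T^{1/5}/\kappa}$ for every value of $x'_j$, and taking expectations over $x'_j$ yields the first claim. Notably, the (rather complicated) law of $x'_j$ --- run $\pi_e$ for $s'$ steps, then take a uniform action --- never enters: the per-start-state mixing bound is all that is used.

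For the value bias I would again fix $x'_j$, write the estimator as $V^*_\pi(x'_j)=\sum_{\ell=1}^{s-1}\bigl(c^{(j)}_\ell-\lambda^*_\pi\bigr)$ with $\E[c^{(j)}_\ell\mid x'_j]=(P_\pi^\ell c_\pi)(x'_j)$, and compare it with $V_\pi(x'_j)=\sum_{\ell=1}^{\infty}\bigl((P_\pi^\ell c_\pi)(x'_j)-\lambda_\pi\bigr)$, the series converging by the geometric-mixing estimate. Subtracting splits the error into (i) the $s-1$ copies of the average-cost error $\lambda_\pi-\E[\lambda^*_\pi\mid x'_j]$, contributing at most $2T^{1/5}e^{-T^{1/5}/\kappa}$ by the previous paragraph, and (ii) the truncation tail $\sum_{\ell\ge T^{1/5}}\bigl((P_\pi^\ell c_\pi)(x'_j)-\lambda_\pi\bigr)$, which the mixing estimate bounds by the geometric series $\sum_{\ell\ge T^{1/5}}2e^{-\ell/\kappa}\le \tfrac{2}{1-e^{-1/\kappa}}e^{-T^{1/5}/\kappa}\le 4\kappa\, e^{-T^{1/5}/\kappa}$ (taking $\kappa\ge 1$ without loss of generality). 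Hence the value bias is at most $(2T^{1/5}+4\kappa)e^{-T^{1/5}/\kappa}$, and I would finish by observing that the hypothesis $T\ge(2\kappa\log(4T^{1/5}))^5$ is exactly $4T^{1/5}e^{-T^{1/5}/(2\kappa)}\le 1$ (and in particular forces $4\kappa\le 2T^{1/5}$), so that $(2T^{1/5}+4\kappa)e^{-T^{1/5}/\kappa}\le 4T^{1/5}e^{-T^{1/5}/(2\kappa)}\cdot e^{-T^{1/5}/(2\kappa)}\le e^{-T^{1/5}/(2\kappa)}$.

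I expect the main obstacle to be bookkeeping rather than anything conceptual: one must read off the conditional means from the rollout construction carefully --- so that $V^*_\pi$ sums costs \emph{strictly after} $x'_j$ (matching the $\ell\ge1$ start of the series defining $V_\pi$) and so that the index at which $\lambda^*_\pi$ is sampled and the number of summands are tracked consistently --- and must remember that $V_\pi$ is an infinite sum, so that a truncation-tail term appears alongside the $\lambda$-replacement term. The only slightly delicate point is the equivalence between the stated lower bound on $T$ and $4T^{1/5}e^{-T^{1/5}/(2\kappa)}\le1$, after which the two exponential factors of $e^{-T^{1/5}/(2\kappa)}$ absorb the polynomial prefactor.
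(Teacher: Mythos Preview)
Your proposal is correct and follows essentially the same route as the paper: both reduce everything to the mixing estimate $|(P_\pi^\ell c_\pi)(x)-\lambda_\pi|\le 2e^{-\ell/\kappa}$, handle $\lambda^*_\pi$ exactly as you do, and decompose the $V^*_\pi$ bias into the $s$ copies of the $\lambda$-error plus a truncation tail, then invoke the hypothesis on $T$ (equivalently $4T^{1/5}e^{-T^{1/5}/(2\kappa)}\le 1$) to absorb the prefactor. The only cosmetic difference is the tail: the paper rewrites $\sum_{k>s}P_\pi^k(c_\pi-\lambda_\pi)$ as $(P_\pi^s-\bone\mu_\pi^\top)V_\pi$ (using $\mu_\pi^\top V_\pi=0$) and bounds it in one shot, whereas you sum the geometric series term by term; both give the same $2(1+s)e^{-s/\kappa}$-type estimate up to harmless constants.
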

The proof is given in Appendix~\ref{app:bias}. Given that these errors are exponentially small, we will ignore them to improve the readability.  

Let $\widehat D$ be the empirical estimate of $D:=D_{\mu_e\otimes u}$ using data $\cD$. 
The next lemma bounds the least-squares Monte-Carlo estimation error.

\begin{lemma}
\label{lem:MCerror}
Fix $\delta\in (0,1)$. Under the assumption that $\norm{\Psi^\top D \Psi} \geq \sigma$ , with probability at least $1-\delta$,  we have that
\[
\norm{\Psi( \widehat w - \widetilde w) }_{\mu_e \otimes u}  = O\left(\sigma^{-1/2} \sqrt{\log(1/\delta)/ m} \right) \,.
\]
\end{lemma}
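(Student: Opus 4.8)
The plan is to recognize $\widehat w$ and $\widetilde w$ as the minimizers of an empirical and a population least-squares objective respectively, and to control their difference via a standard fixed-design-style argument for linear regression, where the ``design matrix'' is $\widehat\Sigma := \Psi^\top \widehat D \Psi = \tfrac1m \sum_j \psi(x_j,a_j)\psi(x_j,a_j)^\top$ and the population version is $\Sigma := \Psi^\top D \Psi$. First I would write the normal equations: $\widehat\Sigma \widehat w = \tfrac1m \sum_j \psi(x_j,a_j) Q^*_\pi(x_j,a_j)$, and decompose the target as $Q^*_\pi(x_j,a_j) = \psi(x_j,a_j)^\top \widetilde w + \xi_j$ where $\xi_j := Q^*_\pi(x_j,a_j) - \psi(x_j,a_j)^\top\widetilde w$ is the residual of the best linear fit. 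This gives the clean identity $\widehat w - \widetilde w = \widehat\Sigma^{-1}\big(\tfrac1m\sum_j \psi(x_j,a_j)\xi_j\big)$, hence $\norm{\Psi(\widehat w - \widetilde w)}_{\mu_e\otimes u}^2 = (\widehat w - \widetilde w)^\top \Sigma (\widehat w - \widetilde w) \le \norm{\Sigma}\,\norm{\widehat\Sigma^{-1}}^2\, \norm{\tfrac1m\sum_j \psi(x_j,a_j)\xi_j}^2$, and I would then bound each of the three factors.

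The three ingredients are: (i) a lower bound on $\lambda_{\min}(\widehat\Sigma)$, i.e. showing the empirical second-moment matrix concentrates around $\Sigma$ whose smallest eigenvalue is $\ge \sigma$ by \cref{assumption:exploration-all} (noting $D = D_{\mu_e\otimes u}$ here, and the earlier assumption is stated for $\nu_{\pi_e}$ — I would either invoke it as stated for $\nu := \mu_e\otimes u$ or note the bounded-density relation between the two; the cleanest is to treat $\sigma$ as the constant for $\nu = \mu_e\otimes u$ exactly as in the lemma hypothesis $\norm{\Psi^\top D\Psi}\ge\sigma$); (ii) a high-probability bound of order $\sqrt{\log(1/\delta)/m}$ on $\norm{\tfrac1m\sum_j \psi(x_j,a_j)\xi_j}$, using that $\E[\psi(x_j,a_j)\xi_j]=0$ by optimality of $\widetilde w$ (the population normal equations), that the $(x_j,a_j,R_j)$ are independent across $j$ (after absorbing the exponentially small bias from Lemma~\ref{lemma:bias}), and that $\psi$ and $\xi_j$ are bounded — $\norm{\psi}\le 1$ say, $|\xi_j|\lesssim Q_{\max}$ — so a vector Hoeffding/Bernstein (or matrix Bernstein for step (i)) applies; (iii) $\norm{\Sigma} = O(1)$ from bounded features. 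Combining, on the event that $\lambda_{\min}(\widehat\Sigma)\ge\sigma/2$ (which holds w.p. $\ge 1-\delta/2$ once $m \gtrsim \sigma^{-2}\log(d/\delta)$, and for smaller $m$ the bound is vacuous anyway since the norm is $O(W_{\max})$) and on the event from (ii) (w.p. $\ge 1-\delta/2$), we get $\norm{\Psi(\widehat w-\widetilde w)}_{\mu_e\otimes u} \le \sqrt{\norm{\Sigma}}\cdot (2/\sigma)\cdot O(\sqrt{\log(1/\delta)/m}) = O(\sigma^{-1/2}\sqrt{\log(1/\delta)/m})$, where one factor of $\sigma^{-1/2}$ is absorbed because $\norm{\Sigma}$ and $\sigma$ are comparable up to the feature scaling — I would double-check the exact power of $\sigma$ the authors claim and track it honestly; the statement says $\sigma^{-1/2}$, which matches if one uses $\norm{\Psi(\widehat w-\widetilde w)}_\nu^2 = (\widehat w-\widetilde w)^\top\Sigma(\widehat w-\widetilde w) \le \lambda_{\min}(\widehat\Sigma)^{-1}\norm{\tfrac1m\sum\psi\xi}^2$ directly (since $\Sigma\preceq \lambda_{\min}(\widehat\Sigma)^{-1}\widehat\Sigma$ is false in general, so instead bound $(\widehat w - \widetilde w)^\top\widehat\Sigma(\widehat w-\widetilde w) = \langle \widehat w-\widetilde w,\tfrac1m\sum\psi\xi\rangle \le \norm{\widehat w-\widetilde w}\norm{\tfrac1m\sum\psi\xi} \le \lambda_{\min}(\widehat\Sigma)^{-1/2}(\widehat w-\widetilde w)^\top\widehat\Sigma(\widehat w-\widetilde w)^{1/2}\norm{\tfrac1m\sum\psi\xi}$, giving $\norm{\widehat w-\widetilde w}_{\widehat\Sigma} \le \lambda_{\min}(\widehat\Sigma)^{-1/2}\norm{\tfrac1m\sum\psi\xi}$, then convert $\norm{\cdot}_{\widehat\Sigma}$ to $\norm{\cdot}_\nu$ at the cost of the concentration event). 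This last bookkeeping is exactly where the $\sigma^{-1/2}$ rather than $\sigma^{-1}$ comes from, so I would be careful there.

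The main obstacle is step (ii): verifying that the cross term $\tfrac1m\sum_j \psi(x_j,a_j)\xi_j$ is genuinely a sum of independent, mean-zero, bounded random vectors. Independence across $j$ requires that the \textsc{CollectData} procedure produces independent tuples — which it does if each is generated by running $\pi_e$ for $s'$ steps from a fresh draw, but in the single-trajectory implementation the soft-reset argument (exponential mixing, \cref{assumption:mixing}) is needed, and one must argue the $s'$-step exploration segment brings the state distribution within $e^{-s'/\kappa}$ of $\mu_e$ so that $x_j$ is approximately $\sim\mu_e$; this coupling error, plus the finite-rollout bias from Lemma~\ref{lemma:bias}, must be shown to contribute only lower-order terms so that mean-zero-ness of $\psi(x_j,a_j)\xi_j$ holds up to negligible error. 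Once independence and boundedness are in hand, the concentration itself (vector Hoeffding) is routine. I would therefore structure the write-up as: (a) reduce to the idealized i.i.d. dataset $\cD$ via Lemma~\ref{lemma:bias} and the mixing bound; (b) establish the normal-equation identity; (c) concentrate $\widehat\Sigma$ (matrix Bernstein) to get $\lambda_{\min}(\widehat\Sigma)\ge\sigma/2$; (d) concentrate the cross term; (e) combine.
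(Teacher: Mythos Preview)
Your approach is correct but takes a different route from the paper's. The paper does \emph{not} invert the empirical matrix $\widehat\Sigma$; instead it writes both normal equations ($\Psi^\top\widehat D\Psi\,\widehat w=\Psi^\top\widehat D\, Q^*_\pi$ and $\Psi^\top D\Psi\,\widetilde w=\Psi^\top D\, Q_\pi$), subtracts, and rearranges to obtain
\[
\Psi^\top D\Psi\,(\widehat w-\widetilde w)=\Psi^\top\widehat D\,(Q^*_\pi-Q_\pi)+\Psi^\top(\widehat D-D)(Q_\pi-\Psi\widehat w)\,.
\]
Since the left side is $\Sigma(\widehat w-\widetilde w)$, left-multiplying by $\Sigma^{-1/2}$ immediately gives $\norm{\Psi(\widehat w-\widetilde w)}_{\mu_e\otimes u}\le\sigma^{-1/2}\norm{\text{RHS}}$, with no matrix-Bernstein step and no need for the event $\{\lambda_{\min}(\widehat\Sigma)\ge\sigma/2\}$. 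Each of the two right-hand terms is then bounded by a scalar Hoeffding argument: the first because $Q^*_\pi-Q_\pi$ is conditionally mean-zero given $(x_j,a_j)$ (up to the bias of Lemma~\ref{lemma:bias}), the second because $\widehat D$ is the empirical version of $D$ on $m$ i.i.d.\ samples. What your single-term decomposition buys is that $\psi(x_j,a_j)\xi_j$ is exactly mean-zero by the population first-order condition for $\widetilde w$, so your step~(ii) is clean; the paper's second term $\Psi^\top(\widehat D-D)(Q_\pi-\Psi\widehat w)$ involves the random $\widehat w$, and the paper glosses over this dependence (implicitly relying on an a~priori bound on $\norm{\widehat w}$ that is only stated after the lemma). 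Conversely, the paper's route is more elementary in that it avoids the matrix-concentration tool entirely. Either argument yields the stated $O\bigl(\sigma^{-1/2}\sqrt{\log(1/\delta)/m}\bigr)$, and your bookkeeping in the last paragraph correctly identifies why the exponent is $-1/2$ rather than $-1$.
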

The proof is given in Appendix~\ref{app:bias}. The above result implies that $\norm{\widehat w - \widetilde w} = O\left(\sigma^{-1} \sqrt{\log(1/\delta)/ m} \right) $. Given that $\norm{\widetilde w} \le W_{\text{max}}$, we conclude that for sufficiently large $m$, $\norm{\widehat w} \le 2 W_{\text{max}}$. Thus, the quantity $Q_{\text{max}}$ in \cref{eq:poliprgen} can be chosen to be $Q_{\text{max}} = 2 W_{\text{max}} \max_{x,a} \norm{\psi(x,a)}$.

\subsection{Comparison with existing work}

Our value estimation approach is related to off-policy temporal difference methods such as LSTD and LSPE in the sense that those methods attempt to solve the projected Bellman equation $\Psi w = \Pi_{\pi_e} (c - \lambda_\pi + H_\pi \Psi w)$. 
where the projection matrix is weighted by the distribution of $\pi_e$, and the transition matrix $H_\pi$ corresponds to the target policy. The goal is again to bound the estimation error in the $\nu_{\pi_e}$-weighted norm. However, while some analysis for off-policy LSTD exists for discounted MDPs \citep{Yu-2010},  we are not aware of any similar results for average-cost problems. In fact, it is known that off-policy LSPE can diverge due to the matrix $\Pi_{\pi_e} H_\pi$ not being contractive in the $\nu_{\pi_e}$-weighted norm \citep{bertsekas2011approximate}. In comparison to off-policy Monte-Carlo methods, our approach benefits from the fact that returns are estimated using on-policy rollouts, and hence we do not require importance weights.

Our bound has two advantages compared with the available results for on-policy LSTD and LSPE. First, the available bounds for these methods involve certain undesirable terms that do not appear in our result. For example, \citet{politex} show an error bound of the form $\norm{\Psi (\widehat w -  w^{\textsc{TD}})}_{\nu_\pi} \le O\left(\frac{1}{(1-\alpha) \sqrt{m}} \right)$. Here, $w^{\textsc{TD}}$ is the TD fixed point, $\widehat w$ is the finite sample LSPE estimate for policy $\pi$, and $\alpha\in [0,1]$ is the contraction coefficient of mapping $\Pi_\pi H_\pi$ with respect to norm $\norm{.}_{\nu_\pi}$, i.e. $\norm{\Pi_\pi H_\pi}_{\nu_\pi} \le \alpha$. \citet{Tsitsiklis-VanRoy-1999} show that $\alpha$ is smaller than one. However, it can be arbitrarily close to one, which will make the error bounds meaningless. Similar quantities appear in the error bounds for the LSTD algorithm. In contrast, our bounds do not depend on the measure of contractiveness. Second, the TD fixed point solution is not the same as the best possible estimation in the linear span of the features, and this introduces an additional approximation error. Let  $\rho =  \norm{(I - \Pi_\pi H_\pi)^{-1} \Pi_\pi H_\pi (I - \Pi_\pi)}_{\nu_\pi}^2$.
Theorem~2.2 of  \citet{yu2010error} shows that we might lose a multiplicative factor of $\sqrt{1+\rho}$: 
\beq
\label{eq:approx_yb}
\norm{Q_\pi -  \Psi w^{\textsc{TD}}}_{\nu_\pi} \leq \sqrt{1 + \rho} \min_{w} \norm{Q_\pi - \Psi w}_{\nu_\pi} \,.
\eeq
In contrast, we aim to get a better estimate by minimizing the error directly without imposing these constraints.  

\section{Experiments}

 \begin{figure}[!t]
 \centering
\includegraphics[width=0.9\linewidth, trim=3cm 0.15cm 4cm 0.5cm, clip]{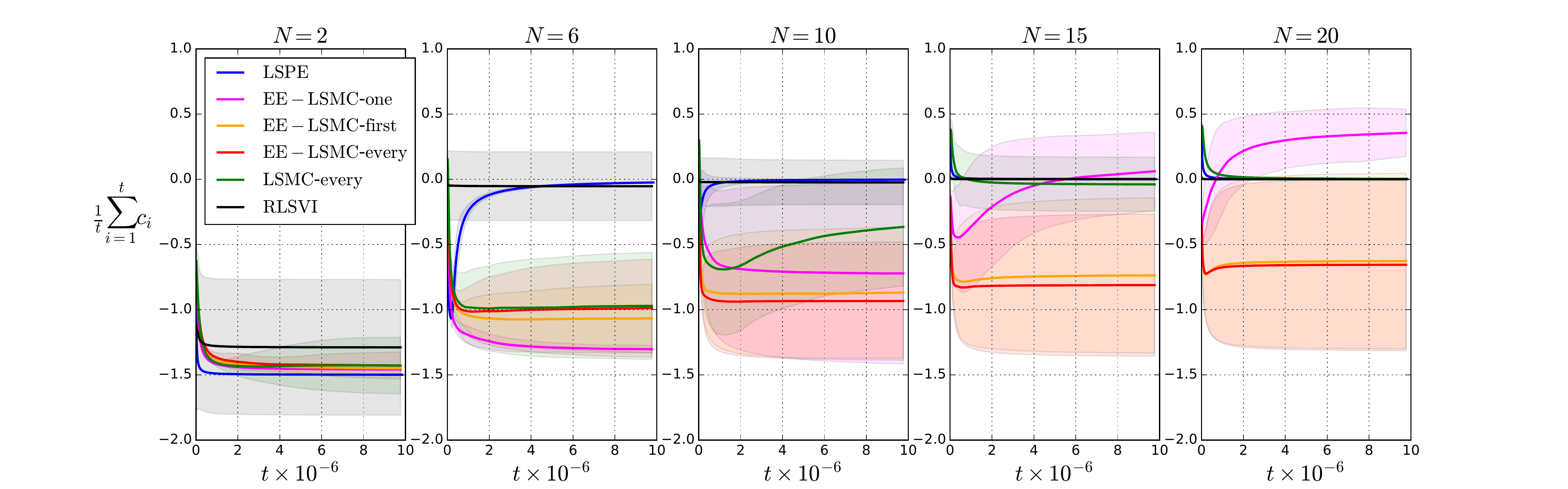}
\caption{Average cost of \alg with LSPE and LSMC, \eealg with LSMC, and RLSVI on the DeepSea environment.}
\label{fig:deepsea}
\end{figure}
\subsection{DeepSea environment}

We first demonstrate the advantages of our exploration scheme on a small-scale environment known as DeepSea (see also \citet{Osband-Wen-VanRoy-2016}), in which exploration can nonetheless be difficult. 
The states of this environment comprise an $N \times N$ grid, and there are two actions. The environment transitions and costs are deterministic. On action 0 in cell $(i, j)$, the environment transitions down and left, to cell $((i + 1) \mod N, \max(0, j-1))$. On action 1 in cell $(i, j)$, the environment transitions down and right, to cell $((i + 1) \mod N, \min(N-1, j+1))$.  The agent starts in state $(0, 0)$. The reward (negative cost) in the bottom-right state $(N-1, N-1)$ is $2N$ for any action. For all other states, the reward for action 0 is 0, and the reward for action 1 is -1. Thus, during the first $N$ steps (and in the episodic version of this task), the agent can obtain a positive return only if it always takes action 1, even though it is expensive.  In the infinite horizon setting, an optimal strategy first gets to the right edge and then takes an equal number of left and right actions, and has an average reward close to $1.5$. A simple strategy that always takes action 1 obtains an average reward close to $1$.
We represent states as length-$2N$ vectors containing one-hot indicators for each grid coordinate. 
 
We experiment with \alg-LSPE, \eealg-LSMC, and \alg-LSMC, i.e. \alg with value estimation using LSMC and no exploration. We also evaluate an online version of RLSVI \citep{Osband-Wen-VanRoy-2016} with linear function approximation, similar to the version described in \citet{politex}. 
For exploration, we use a policy that always takes action 1 and runs for $s' = N/2$ steps in each rollout. This policy can help discover the hidden reward but incurs additional costs.
 For LSMC, we evaluate first-visit, every-visit, and one-visit (i.e. just using the first sample) return estimates computed from length-$s$ rollouts. 
The results are shown in Figure~\ref{fig:deepsea} as costs, i.e. negative rewards. On a small $2\times2$ grid, all policies achieve the lowest cost. However, as the grid size increases, RLSVI and no-exploration \alg converge to the suboptimal policy which always takes action 0.  The performance of one-visit LSMC with exploration also deteriorates for higher $N$, and costs are positive due to exploration segments, suggesting that longer runs (more samples) are required in this case.

\subsection{Sparse Cartpole with function approximation}

In the next experiment, we examine whether the promising theoretical results presented in this paper lead to a practical algorithm when applied in the context of neural networks. We take the classic Cartpole (aka.\ Inverted Pendulum) problem \citep{tassa2018deepmind} (\cref{fig:cartpole} right), where the objective is to balance up a pole attached to a cart by only moving the cart left and right on the $x$ axis. The observation is a tuple consisting of the $x$ coordinate of the cart and its velocity, the cosine and sine of the angle $\theta$ of the pole compared to the upright position, and the rate of change of this $\theta$. There are three actions which correspond to applying force to the cart towards the left or right or applying no force. Each episode begins with the pole hanging downwards and ends after 1000 timesteps. There is a small cost of -0.1 for any movement of the pole. A reward of 1 is collected whenever the pole is almost upright and the cart is centered (with a controllable threshold). This is a difficult exploration problem as the rewards are sparse; in particular, no reward is seen at intermediate states which the agent has to nevertheless explore to solve the problem. 
We compare \eealg to \alg, where for \eealg we make use of a separate exploration policy that was trained with a separate reward function that encourages the exploration of states where the pole is swung up. We approximate state-action value functions using neural networks. We execute policies that are based only on the most recent $n$ neural networks, where $n$ is a parameter to be chosen. Further implementation details are given in \cref{app:cartpole}. Rather than evaluating learned policies after training, in line with the setting of this paper, we plot the total reward obtained by the agent against the total number of time steps.
From \cref{fig:cartpole}, we can see that \emph{without} the exploration policy, \alg never finds the solution to the problem, and learns to not move the cart at all. On the other hand, \eealg takes advantage of the exploration policy and manages to learn to collect rewards.
In contrast, we also ran experiments on Ms Pacman (Atari), where we found that mixing in an exploration policy did not help (details in \cref{app:pacman}). 

\begin{figure}[!t]
\begin{minipage}{0.50\textwidth}
\centering
\includegraphics[width=1.0\linewidth]{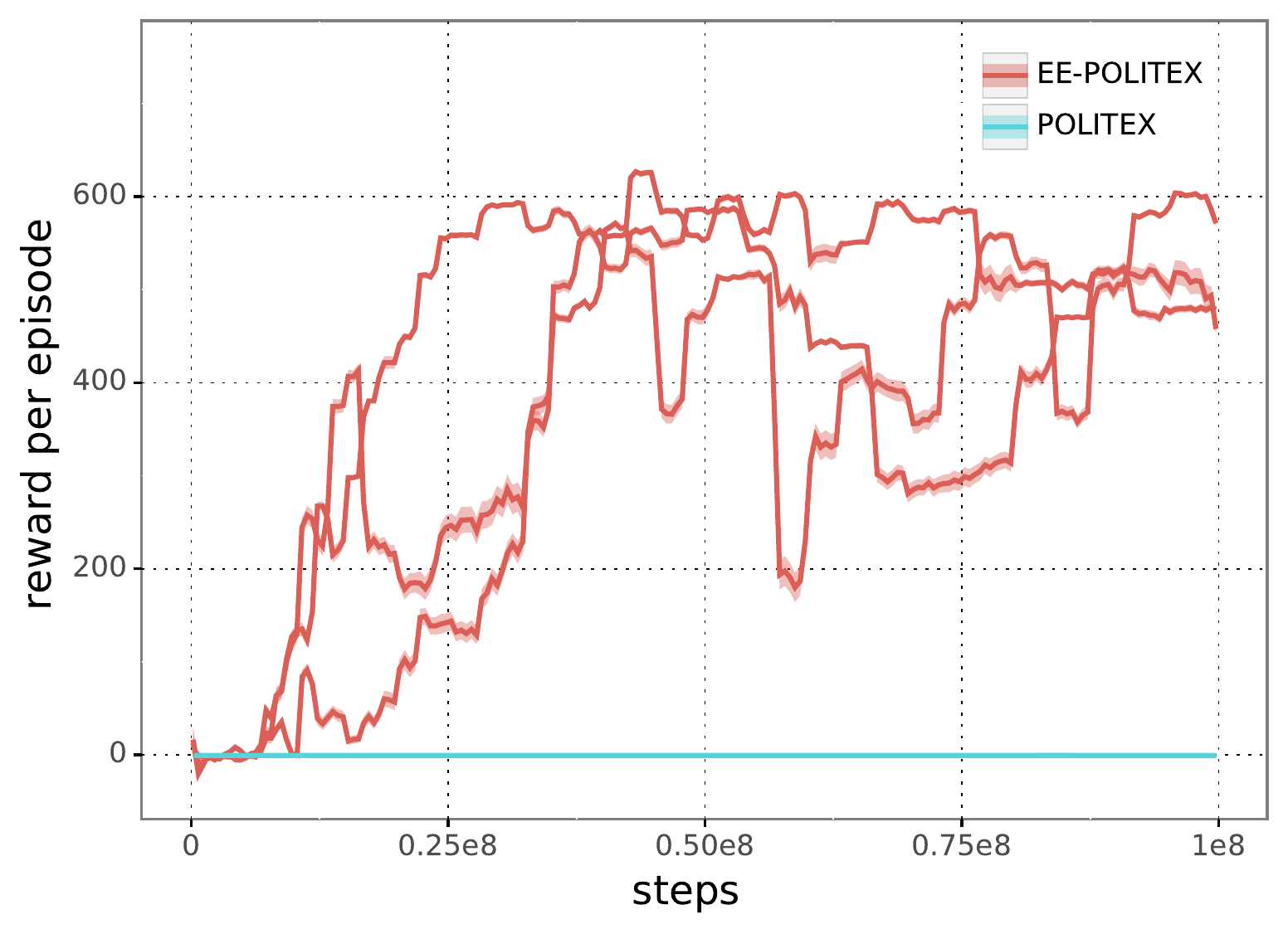}
\end{minipage}
\begin{minipage}{0.03\textwidth}
~
\end{minipage}
\begin{minipage}{0.45\textwidth}
\centering
\includegraphics[width=1.0\linewidth]{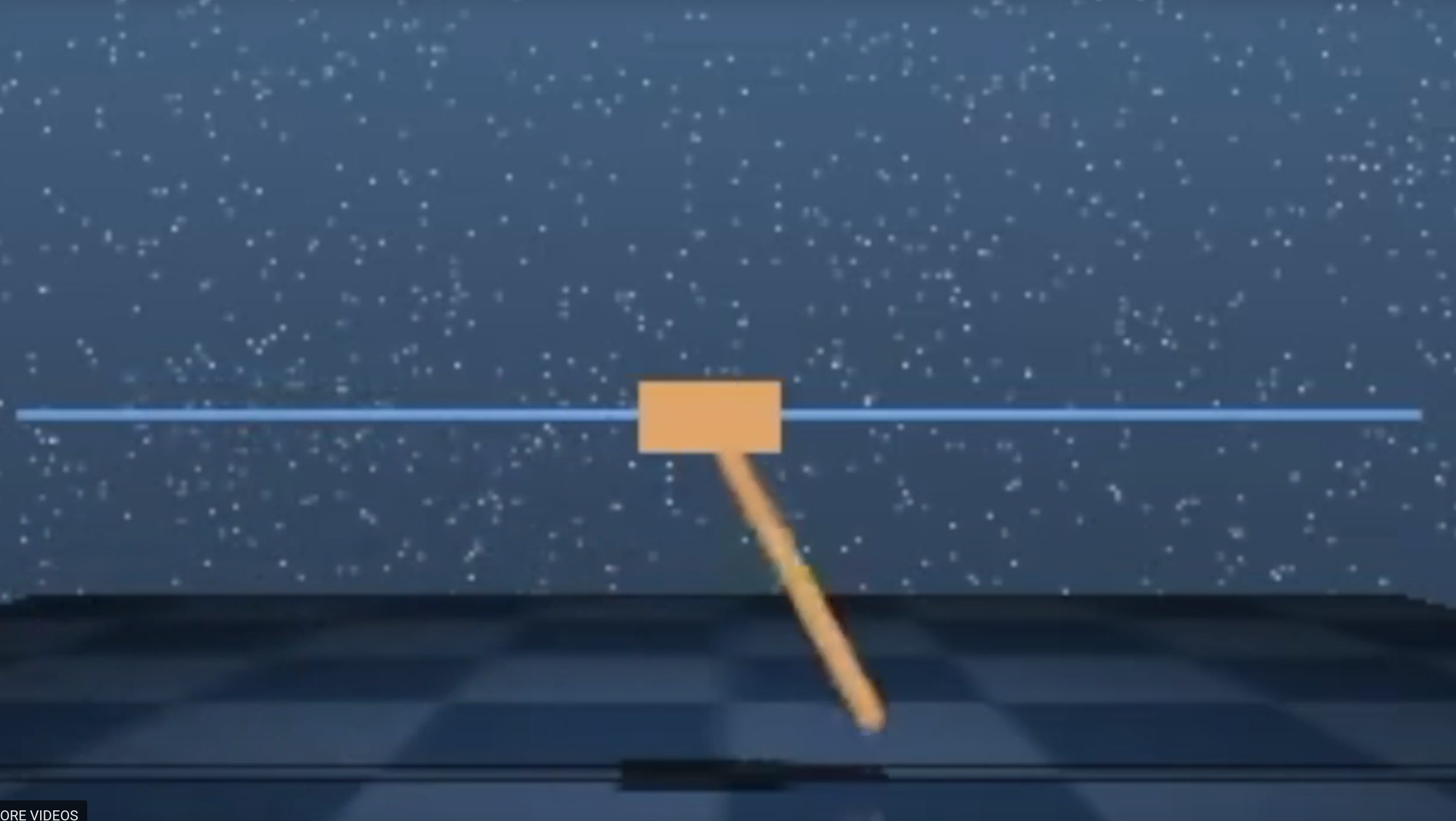}
\end{minipage}
\vspace{-0.2cm}
\caption{Left: Rewards obtained by the agents during the course of an episode. The plots are based on three runs of each algorithm with different random seeds. In the case of \eealg, rewards are only plotted for episodes where the agent did not follow the exploration policy. Right: A visualisation of an example state from the Cartpole environment.}
		\label{fig:cartpole}
\end{figure}

\section{Discussion}

We have proposed an exploration strategy that utilizes a fast-mixing exploratory policy. This strategy  can be used with an action-value estimation algorithm that learns from on-policy trajectories whose initial states are chosen from the stationary state-distribution of the exploration policy. One such algorithm is least-squares Monte Carlo, for which we have provided an analysis of the estimation error.  
 Integrating our exploration scheme into the \alg algorithm  results in a new algorithm that enjoys sublinear regret guarantees under much weaker assumptions than previous work. The exploration scheme was demonstrated to improve empirical performance in difficult environments. 
While much work has been devoted to learning exploration policies, an interesting open problem for future work is learning non-trivial exploration policies which span the state-action feature space (and do not depend on returns).


\bibliography{biblio}
\newpage
\appendix

\section{Proof of Lemma~\ref{lem:bound-VW}}
\label{sec:vwproof}

We prove that if exploration-enhanced \alg has a total of $2nm$ policy restarts, and each policy is executed for at most $s$ timesteps, then with probability at least $1-\delta$, we have that
\begin{align*}
 |W_T| &\leq \kappa + 4\kappa\sqrt{2T\log\left(2/{\delta}\right)}\,,\\
|V_T| &\leq nm\kappa + 4 \kappa n\sqrt{4 ms \log\left(2T/{\delta}\right)}.
\end{align*}

\begin{proof}
For $W_T$, the proof is the same as in Lemma~4.4 of \citet{politex}, and we provide only a sketch. 
We decompose the term $W_T$ as follows:
  \begin{equation}\label{eq:gamma_decomp}
    W_T 
     = \sum_{t = 1}^T \lambda_{\pi^*} - c_t^*
     = \sum_{t = 1}^T \lambda_{\pi^*} -\E[c_t^*] + \sum_{t=1}^T \E[c_t^*] -c_t^* 
     = \underbrace{\sum_{t = 1}^T (\nu^* - \nu_t^*)^\top c}_{W_{T,1}} 
     + \underbrace{\sum_{t=1}^T \E[c_t^*] -c_t^*}_{W_{T,2}},
  \end{equation}
  where 
  $\nu^*$ denotes the stationary distribution of the policy $\pi^*$ 
  and $\nu_t^*$ is the state-action distribution after $t$ time steps
	and we used that $\E[c_t^*] = \nu_t^\top c$.  
We bound  $W_{T,1}$ in the equation above using the uniform mixing assumption:
  \begin{equation}
  \label{eq:gamma_bound1}
    \left\vert\sum_{t = 1}^T (\nu^* - \nu_t^*)^\top c\right\vert 
    \leq \sum_{t=1}^T \|\nu^* - \nu_t^* \|_1\|c\|_\infty 
    \leq \sum_{t=1}^T \exp(-t/\kappa) \leq \kappa,
  \end{equation}
where we have assumed that $\norm{c}_\infty \leq 1$. The second term can be written as $W_{T, 2} = \ip{B_0-B_T,c}$, where $B_i = \E[\sum_{t=1}^T X_t | X_1, ..., X_i]$, and $X_t$ is a binary indicator vector with a non-zero at the linear index of the state-action pair $(x_t, a_t)$. The bound $W_{T, 2} \leq 4\kappa\sqrt{2T\log\left(2/{\delta}\right)}$ follows from the fact that $(B_i)_{i=1}^T$ is a vector-valued martingale with a bounded difference sequence and the Azuma inequality.

The bound on $V_T$ follows similarly by noticing that \alg makes $2nm$ policy switches of length $s$.  Decomposing $V_T = V_{T, 1} + V_{T, 2}$ analogously to $W_{T}$, we have that $V_{T, 1} \leq nm \kappa$. For $V_{T,2}$, we have that with probability at least $1-\delta$, each length-$s$ segment $\sum_{i=t}^{t+s-1}\E[c_t] - c_t$ is bounded by $4\kappa \sqrt{4s \log (2/\delta)}$.  Within each of $n$ iterations, we have $m$ such identically-distributed bounded segments corresponding to the same policy.  A similar observation applies to length-$s'$ segments corresponding to the exploration policy.
Thus, using a union bound and Hoeffding's inequality, we have 
\begin{equation}\label{eq:alpha_bound_final}
|V_T| \leq 2 n m \kappa + 4 n \kappa\sqrt{4 m s \log(2m(s + s') /\delta)}\, .
\end{equation}

\if0
Each term in $V_{T,2}$  is a zero-mean random variable that is in the range $\left[-4 \kappa \sqrt{2 s \log \left(\frac{2}{\delta}\right)}, 4 \kappa \sqrt{2 s \log \left(\frac{2}{\delta}\right)}\right]$ with high probability. Under the high probability event that all these random variables fall in this range, with Hoeffding's inequality and a union bound over all $2nm$ events, we get with probability at least $1-\delta$, \todonev{don't understand how you get $\sqrt{nm}$}
\begin{equation}\label{eq:alpha_bound_final}
|V_T| \leq 2 n m \kappa + 4\kappa\sqrt{4 n m s \log\left(\frac{2T}{\delta}\right)}.
\end{equation}
\fi
\end{proof}

\section{Proofs of Section~\ref{sec:val-est}}
\label{app:bias}

First, we prove that for the choice of $s=T^{1/5}$ and for $T \ge (2 \kappa \log(4 T^{1/5}))^5$, 
\[
\abs{\lambda_\pi - \E[\lambda^*_\pi]} \le 2 e^{-T^{1/5}/\kappa}\,, \qquad \abs{\E[V^*_\pi(x'_j)] - V_\pi(x'_j)} \le e^{-T^{1/5}/(2\kappa)} \;.
\]
\begin{proof}[Proof of Lemma~\ref{lemma:bias}]
Let $ \one{x}$ be a binary indicator vector corresponding to a state $x$. 
By \cref{eq:estQ},
\begin{align*}
\E[V^*_\pi(x'_j)] &=  \sum_{k=1}^s \E[ c_k^{(j)} -  \lambda^*_\pi ] \\ 
&=  \one{x'_j}^\top \sum_{k=1}^s  P_\pi^k (c_\pi - \lambda_\pi) + \sum_{k=1}^s \E[ \lambda_\pi -  \lambda^*_\pi ] \\
&= V_\pi(x'_j) - \one{x'_j}^\top \sum_{k=s+1}^\infty  P_\pi^k (c_\pi - \lambda_\pi) + \sum_{k=1}^s \E[ \lambda_\pi -  \lambda^*_\pi ] \\
&= V_\pi(x'_j) - \one{x'_j}^\top (P_\pi^s - {\bf 1} \mu_\pi^\top ) V_\pi + \sum_{k=1}^s \E[ \lambda_\pi -  \lambda^*_\pi ]\,, \qquad \text{By \cref{eq:defn-val} and $\mu_\pi^\top V_\pi = 0$.}
\end{align*}
By the uniformly fast mixing assumption, 
\[
\abs{\lambda_\pi - \E[ \lambda^*_\pi]} = \abs{\mu_\pi^\top c_\pi - \one{x'_j}^\top P_\pi^s c_\pi} \le 2 e^{-s/\kappa}\,,
\]
and
\[
\abs{\E[V^*_\pi(x'_j)] - V_\pi(x'_j)} \le 2 (1 + s) e^{-s/\kappa} \le e^{-T^{1/5}/(2\kappa)} \,,
\]
where the last inequality follows by the choice of $s=T^{1/5}$ and for $T \ge (2 \kappa \log(4 T^{1/5}))^5$.
\end{proof} 

\begin{proof}[Proof of Lemma~\ref{lem:MCerror}]
We have that $\Psi^\top \widehat D \Psi \widehat w = \Psi^\top \widehat D  Q^*_\pi$ and $\Psi^\top D \Psi \widetilde w = \Psi^\top D Q_\pi$. Thus,
\begin{align*}
\Psi^\top D \Psi (\widetilde w - \widehat w) + \Psi^\top (D - \widehat D) \Psi \widehat w = \Psi^\top (D - \widehat D) Q_\pi + \Psi^\top \widehat D (Q_\pi - Q^*_\pi) \;.
\end{align*}
Using the assumption that $\norm{\Psi^\top D \Psi} \geq \sigma$ for the exploration policy,
\begin{align}
\label{eq:errTheta}
 \norm{D^{1/2} \Psi( \widehat w - \widetilde w)} & = \norm{\Psi( \widehat w - \widetilde w) }_{\mu_e \otimes u}  \notag \\ 
 & \le \frac{1}{\sqrt{\sigma}} \norm{\Psi^\top \widehat D ( Q_\pi - Q^*_\pi)} + \frac{1}{\sqrt{\sigma}} \norm{\Psi^\top ( D - \widehat D) (Q_\pi - \Psi \widehat w)}   \;.
\end{align}
For the second term, consider that for any vector $z\in\Real^S$,
\[
\Psi^\top D z = \sum_{x,a} \mu_e(x) u(a) z(x) \psi(x,a)\,, 
\]
and $\Psi^\top \widehat D z$ is simply its estimate using $m$ i.i.d. samples. So the second term can be bounded as $O(1/\sqrt{m})$. For the first term, let $v = \widehat D ( Q^*_\pi - Q_\pi)$, and notice that only $m$ elements of this vector are non-zero and these elements are random variables with zero expectation. So for any deterministic vector $z\in [0,1]^S$, by Hoeffding's inequality and with probability at least $1-\delta$,
\[
z^\top v = \sum_{i=1}^m v(i) z(i) \le \frac{1}{m}\sqrt{\log(1/\delta)\sum_{i=1}^m z_i^2} \le \sqrt{\frac{\log(1/\delta)}{m}} \;.
\]
\end{proof}

\section{Neural network experiment setup}

\subsection{Cartpole experiment}

\label{app:cartpole}
Our implementation of the Cartpole experiment is based on \citet{horgan2018distributed}, which is a distributed implementation of DQN \citet{mnih2015human}, featuring 
Dueling networks \citep{DBLP:journals/corr/WangFL15},
N-step returns \citep{DBLP:journals/corr/AsisHHS17},
Prioritized replay \citep{schaul2015prioritized}, and
Double Q-learning \citep{DBLP:journals/corr/HasseltGS15}. 
To adapt this to \alg, we used TD-learning and Boltzmann exploration with the learning rate $\eta$ set according to SOLO FTRL by \citep{OrDa15}: For a given state $x$,
\[
\eta = \alpha \sqrt{2.75}\sqrt{\frac{\log d}{P_t}},
\]
where $\alpha=10$ is a tuneable constant multiplier (chosen based on preliminary experiments); 
$d$ is the number of actions in the game and
\[
P_t = \min_{c\in\R} \sum_{i=1}^t \left\| Q_i(x, \cdot)- c\mathds{1}\right\|^2_\infty,
\]
where $Q_i(x,\cdot)$ are the state-action values for all past $Q$-networks indexed from $1$ to the current timestep $t$, $\mathds{1}$ is a vector of all ones and the minimisation over $c$ achieves robustness against the changing ranges of state-action values. The minimisation is one-dimensional convex optimisation problem which we solve numerically.

Both methods used the same neural network architecture of a dueling MLP Q-network with one hidden layer of size 512. Each learner step uniformly samples a batch of 128 experiences from the experience replay memory. The actors take 16 game steps in total for each learning step. We enter a new phase and terminate the current phase when the number of learning steps taken in the current phase reaches 100 times the square root of the total learning steps taken. 
When a new phase is started, the freshly learned neural network is copied in a circular buffer of size 10, which is used by the actors to calculate the averaged Q-values, weighted by the length of each phase. For \eealg, we split each phase into ``mini-phases'' of length scaling with the square root of the current phase lenght. Each mini-phase consists of following the exploration policy for 1000 steps, and following the learned policy for a number of steps that scales with the square root of the current phase length. This corresponds to the settings $s'=1000$, $n=T^{1/2}$, $m=T^{1/4}$, $s=T^{1/4}-s'$.

\subsection{Ms Pacman (Atari) experiment}

\label{app:pacman}
We also compared \alg with \eealg on the Atari game Ms Pacman. Here, the exploration strategy mixed into \eealg was trained such that the agent had to drive Ms Pacman to random positions on the map. This policy collects significantly less rewards than one trained to maximise rewards, but explores the map well. We found that this exploration didn't help \eealg; in particular, it seems that exploration is relatively simple in this game, and enough exploration is performed even without mixing in the exploration policy. The results are shown in \cref{fig:pacman}; the exploration only hurts \eealg as the agent collects less reward from exploration episodes.

\begin{figure}[!t]
\centering
\includegraphics[width=0.55\linewidth]{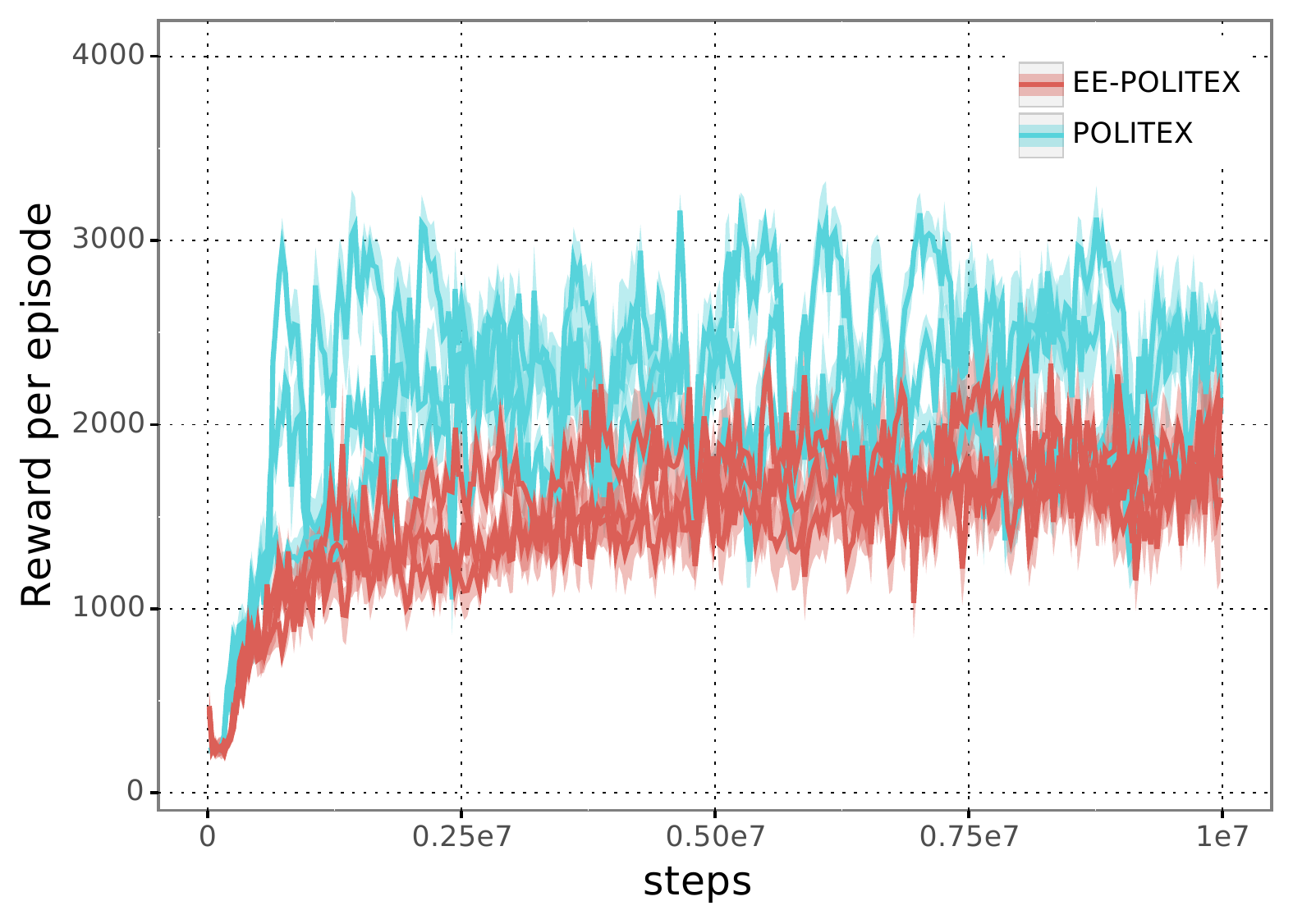}
\vspace{-0.2cm}
\caption{Rewards obtained by the agents during the course of an episode. The plots are based on three runs of each algorithm with different random seeds.}
\label{fig:pacman}
\end{figure}

\end{document}